\newcommand{\bsi}{\boldsymbol{\beth}}
\title{Algebraic Models for Qualified Aggregation in General Rough Sets, and Reasoning Bias Discovery}
\titlerunning{Algebraic Models for Aggregation}
\author{\textsf{A Mani}\thanks{This research is supported by Woman Scientist Grant No. WOS-A/PM-22/2019 of the Department of Science and Technology.}}
\authorrunning{A Mani}
\institute{Machine Intelligence Unit, Indian Statistical Institute, Kolkata\\
203, B. T. Road, Kolkata-700108, India\\
Email: \texttt{$a.mani.cms@gmail.com$} \texttt{$amani.rough@isical.ac.in$}\\
Homepage: \url{https://www.logicamani.in}\\
Orcid: \url{https://orcid.org/0000-0002-0880-1035} }
\begin{document}

\maketitle

\begin{abstract}
In the context of general rough sets, the act of combining two things to form another is not straightforward. The situation is similar for other theories that concern uncertainty and vagueness. Such acts can be endowed with additional meaning that go beyond structural conjunction and disjunction as in the  theory of $*$-norms and associated implications over $L$-fuzzy sets. In the present research, algebraic models of acts of combining things in generalized  rough sets over lattices with approximation operators (called rough convenience lattices) is invented. The investigation is strongly motivated by the desire to model skeptical or pessimistic, and optimistic or possibilistic aggregation in human reasoning, and the choice of operations is constrained by the perspective. Fundamental results on the weak negations and implications afforded by the minimal models are proved. In addition, the model is suitable for the study of discriminatory/toxic behavior in human reasoning, and of ML algorithms learning such behavior.
\end{abstract}

\begin{keywords}
Abstract Approximations, Rough Implications, Algebraic Semantics, Skeptical Reasoning, Overly Optimistic Reasoning, L-Fuzzy Implications, Granular Operator Spaces, Rough Convenience Lattices, Algorithmic Bias Discovery
\end{keywords}

\section{Introduction}

In any context, the act of combining two things involves meta-level or semantic assumptions. These are more involved in the context of general rough sets because of the increased complexity of associated domains of discourse. Any generalized conjunction-like operation is referred to as an \emph{aggregation}, while a generalized disjunction-like operation as a \emph{co-aggregation} (valuations are not assumed). The situation is similar for other mature theories that concern uncertainty, vagueness or imprecision. For example, the theory of $*$-norms and associated implications are extensively investigated over $L$-fuzzy sets \cite{bfpr2013,bmjb2008}. The purpose of the present research is to invent (or construct), and investigate models of somewhat related acts of combining things over lattices with approximation operators, and without explicit negations. At the application level, this research is strongly motivated by the desire to model skeptical or pessimistic or cautious, and overly optimistic reasoning over concepts (in human reasoning), and the choice of operations is constrained by the perspective. 

Fundamental semantic results that formally address the following constraint on domains of discourse: \textsf{When things are implied and negated in some perspectives then they are being approximated and vice versa} are proposed. More specifically, concrete algebraic models (that involve a surprisingly weak set of axioms) in which the principle is valid are shown to exist in this research. 

Several algebraic models of the operations of combining objects or rough objects (in several senses) in the context of classical, general and granular rough sets are known \cite{am501,amedit,ppm2,jj}. However, not many impose a meaning constraint that amount to combining in skeptical or biased or bigoted or overly optimistic ways. These concepts can possibly be attained relatively through partial orders on approximations. For example, if the lower approximation $l_1$ approximates better than another lower approximation $l_2$, then the latter is relatively more skeptical than the former. Consequently, aggregations of the $l_2$-approximations of objects must be more skeptical than that of aggregations of $l_1$-approximations. In the present research, the relative aspect is hidden because in most cases, the collections of rough objects (in various senses) form a lattice.

In the context of Pawlak/classical rough sets, it is proved by the present author \cite{am9411} that aggregations $f$ (interpreted as \emph{rough dependence}), defined by the equation  $f(a, b) = a^l \cap b^l$ can be used to define algebraic models that make no reference to approximations. The intent in the paper was to establish the differences between rough sets and probability theory from a \emph{dependence} perspective. For a fuller discussion, the reader is referred to Section \ref{skpd}. However, the status of this operation and related ones in the abstract rough set literature is not known. This fundamental problem is solved in this research in suitably minimalist frameworks without negation operations. The framework is far weaker than the general approximation algebras considered in the paper \cite{cd3}, and is a specific version of a high general granular operator space \cite{am501,am5586} without the granularity requirement. Specific set-theoretic subclasses of high general granular operator spaces are also covered. 

The organization of this paper is as follows. Necessary background is outlined in the next section. The model(s) are invented in the third section. Illustrative concrete and abstract instances of the models are explored in the section on examples. Connections between qualified aggregation and rough dependence are clarified in the fifth section.

\section{Background}

\begin{definition}
A \emph{L-Fuzzy set} \cite{gj1967} is a map $\varphi : X \longmapsto L$, where $X$ is a set and $L = \left\langle \underline{L}, \leq \right\rangle$ is a quasi-ordered set. The set of all L-fuzzy sets will be denoted by $\mathbb{F}(X, L)$.
\end{definition}

\subsection{T-Norms, S-Norms, Uninorms, and Implications}

While T-norms, S-norms and uninorms are primarily viewed as operations on lattices or partially ordered sets for the algebraic and logical models of $L$-fuzzy sets, they can be utilized in the algebraic models of entirely different phenomena. The mentioned T/S/uni-norms are well-known algebraic operations in the algebra literature because the topological constraints of the unit interval context are not imposed. The conditions that make related considerations stand out from those on corresponding order-compatible algebras are boundedness and the role of additional operations such as those of generalized implications and negations. Some essential concepts (for more details, see for example \cite{bsmm2022,bfpr2013,yrr2004,ppm2,mkcs}) are mentioned here. 

\begin{definition}
Let $P= \left\langle \underline{P},\leq, e  \right\rangle$ be a partially ordered set (poset) on the set $\underline{P}$ with distinguished element (or $0$-ary operation) $e$, then any order-compatible binary associative operation $\cdot$ on it with identity element $\cdot$ is referred to as a \emph{pseudo uni-norm}. A commutative pseudo uni-norm is a \emph{uni-norm}. If $e$ is the greatest (respectively least) element of $P$ then $\cdot$ is a \emph{pseudo t-norm} (respectively \emph{pseudo s-norm}).
\end{definition}

The set of all pseudo uninorms on the poset $P$ is denoted by $\mathcal{U}(P, e)$. It forms a poset in the induced point-wise order. Both t-norms and t-conorms are uninorms.

\begin{definition}
If $L$ is a bounded lattice with bottom $\bot$ and top $\top$, then a \emph{t-norm} $\odot$ is a commutative, associative order compatible  monoidal operation with $\top$ being the identity. A \emph{s-norm} (or t-conorm) is a commutative, associative order compatible  monoidal operation with $\bot$ being the identity.
\end{definition}

Consider the conditions possibly satisfied by a map $n: L\longmapsto L$:
\begin{align}
n(\bot) = \top \, \&\, n(\top) = \bot \tag{N1}\\
(\forall a, b) (a\leq b \longrightarrow n(b)\leq n(a)) \tag{N2}\\
(\forall a) n(n(a)) = a \tag{N3}\\
n(a) \in \{\bot , \top \} \text{ if and only if } a= \bot \text{ or } a = \top \tag{N4}
\end{align}

$n$ is a \emph{negation} if and only if it satisfies \textsf{N1} and \textsf{N2}, while $n$ is a \emph{strong negation} if and only if it satisfies all the four conditions.

\subsection{Implication operations} 

Implications satisfy a wide array of properties as they depend on the other permitted operations. Here some relevant ones are mentioned. 

A function $\bsi : L^2 \longmapsto L $ is an \emph{implication} \cite{bfpr2013} if it satisfies (for any $a, b, c \in L$) the following:
\begin{align}
\text{If } a \leq b \text{ then } \bsi bc \leq \bsi ac \tag{First Place Antitonicity FPA}\\
\text{If } b \leq c \text{ then } \bsi ab \leq \bsi ac \tag{Second Place Monotonicity SPM}\\
\bsi \bot\bot =\top \tag{Boundary Condition 1: BC1}\\
\bsi \top \top = \top \tag{Boundary Condition 2: BC2}\\
\bsi \top \bot = \bot  \tag{Boundary Condition 3: BC3}
\end{align}

Infix notation is preferred for algebraic reasons. The set of all implications on the lattice $L$ will be denoted by $\mathcal{I}(L)$. It can be endowed with a bounded lattice structure under the induced order
\[\bsi_1 \preceq \bsi_2 \text{ if and only if } (\forall a, b\in L ) \bsi_1 ab \leq \bsi_2 ab .\]

The top $\bsi_\top$ and bottom $\bsi_\bot$ implications are defined as follows:
\begin{itemize}
\item {If $a = \top \,\& \,b= \bot$ then $\bsi_\top ab = \bot$,  otherwise $\bsi_\top ab = \top$.}
\item {If $a = \bot \,\& \,b= \top$ then $\bsi_\bot ab = \top$,  otherwise $\bsi_\bot ab = \bot$.}
\end{itemize}

Some other properties of interest in this paper are 
\begin{align}
\bsi \top x = x \tag{LNP}\\
\bsi a(\bsi bc) = \bsi (b \bsi (ac)) \tag{Exchange Principle EP}\\
\bsi ab =1 \text{ if and only if } a\leq b   \tag{Ordering Property, OP}\\
\bsi a(\bsi ab) = \bsi ab  \tag{Iterative Boolean Law, IBL}\\
b \leq \bsi ab \tag{Consequent Boundary, CB}
\end{align}

Further, note that Tarski algebras are the same thing as implication algebras \cite{am5019,rh}. A few full dualities relating to classes of such algebras are known. One of this is a duality for finite Tarski sets \cite{csa,sclc2008} or covering approximation spaces. 

\begin{definition} 
A \emph{Tarski algebra} (or an \emph{implication algebra}) is an algebra of the form $S= \left\langle\underline{S}, \bsi , \top   \right\rangle$ of type $2, 0$  that satisfies (in the following,  \cite{rh})
\begin{align*}
\bsi \top a = a \tag{Left Neutrality, LNP}\\
\bsi aa = \top \tag{Identity Principle, IP}\\
\bsi a(\bsi bc) = \bsi (\bsi ab)(\bsi ac) \tag{T3}\\
\bsi (\bsi ab)b = \bsi (\bsi ba)a \tag{T4}
\end{align*}
\end{definition}

A join-semilattice order $\leq$ is definable in a IA as below:
\[(\forall a, b)\, a \leq b \leftrightarrow \bsi ab=\top ; \text{ the join is } a\vee b = \bsi(\bsi ab)b\]
\emph{Filters or deductive systems} of an IA $S$ are subsets $K\subseteq S$ that satisfy 
\[1\in K \, \&\, (\forall a, b)(a, \bsi ab\in K \longrightarrow b\in K)\]
The set of all filters $\mathcal{F}(S)$ is an algebraic, distributive lattice whose compact elements are all those filters generated by finite subsets of $S$. 

\section{Model of Rough Skeptic and Pessimistic Reasoning}

For concreteness, a minimal base model over which the theory will be invented is defined next. Generalizations to weaker order, antichains, and partial approximation operations are considered in a separate paper. 

\begin{definition}\label{rcon}
An algebra of the form ${B} \, =\, \left\langle \underline{B}, l, u, \vee,  \wedge, \bot, \top \right\rangle$ with $(\underline{B}, \vee, \wedge, \bot, \top )$ being a bounded lattice will be said to be a \emph{rough convenience lattice} (RCL) if the following conditions are additionally satisfied ($\leq$ is the associated lattice order, and the operations $l$ and $u$ are generalized lower and upper approximation operators respectively):
\begin{align}
(\forall x) x^{ll}= x^l \leq x \leq x^u \leq x^{uu}  \tag{lu1}\\
(\forall a, b) (a\leq b \longrightarrow a^l\leq b^l )    \tag{l-mo}\\
(\forall a, b) (a\leq b \longrightarrow a^u\leq b^u )    \tag{u-mo}\\
(\forall a, b) a^l \vee b^l \leq (a\vee b)^l \,\&\, a^u \vee b^u = (a\vee b)^u   \tag{lu2}\\
(\forall a, b) (a\wedge b)^l = a^l \wedge b^l \, \&\, (a\wedge b)^u \leq a^u \wedge b^u   \tag{lu3}\\
\top^u = \top \,\&\, \bot^{l} = \bot =\bot^u  \tag{topbot}
\end{align}
\end{definition}

\begin{proposition}
In Definition \ref{rcon}, \textsf{lu2} and \textsf{lu3} follow from \textsf{lu1, l-mo} and \textsf{u-mo}. 
\end{proposition}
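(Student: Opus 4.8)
The plan is to split \textsf{lu2} and \textsf{lu3} into their constituent inequalities and to handle each. Four of these come for free from monotonicity: from $a\le a\vee b$ and $b\le a\vee b$, axiom \textsf{l-mo} gives $a^l\le (a\vee b)^l$ and $b^l\le(a\vee b)^l$, so $a^l\vee b^l\le (a\vee b)^l$ because $\vee$ is a least upper bound; the identical argument with \textsf{u-mo} gives $a^u\vee b^u\le (a\vee b)^u$; dually, from $a\wedge b\le a$ and $a\wedge b\le b$, axioms \textsf{l-mo} and \textsf{u-mo} give $(a\wedge b)^l\le a^l\wedge b^l$ and $(a\wedge b)^u\le a^u\wedge b^u$ because $\wedge$ is a greatest lower bound. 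This already yields the first conjunct of \textsf{lu2}, the second conjunct of \textsf{lu3}, and the ``$\le$'' half of each of the two equalities.

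What is left are the two reverse inclusions $(a\vee b)^u\le a^u\vee b^u$ and $a^l\wedge b^l\le (a\wedge b)^l$. For the first I would route $a\vee b$ back through $u$ using extensivity from \textsf{lu1}: $a\le a^u$ and $b\le b^u$ give $a\vee b\le a^u\vee b^u$, hence $(a\vee b)^u\le (a^u\vee b^u)^u$ by \textsf{u-mo}, so it suffices to prove $(a^u\vee b^u)^u\le a^u\vee b^u$, that is, that $a^u\vee b^u$ is a fixed point of $u$. Dually, $a^l\le a$ and $b^l\le b$ give $a^l\wedge b^l\le a\wedge b$, hence $(a^l\wedge b^l)^l\le (a\wedge b)^l$ by \textsf{l-mo}, so it suffices to prove $a^l\wedge b^l\le (a^l\wedge b^l)^l$, that is, that $a^l\wedge b^l$ is a fixed point of $l$; for this I would try to exploit the idempotence clause $x^{ll}=x^l$ of \textsf{lu1}, which makes $a^l$ and $b^l$ each individually $l$-fixed, together with \textsf{l-mo} applied to $a^l\wedge b^l\le a^l$ and $a^l\wedge b^l\le b^l$.

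The crux is therefore exactly the closure of the set of $u$-fixed elements under $\vee$ and of the set of $l$-fixed elements under $\wedge$. This is the single step that uses $l$ and $u$ as genuine approximation operators rather than as arbitrary monotone self-maps, and it is where I would expect to push hardest on \textsf{lu1} (extensivity together with idempotence) or, failing that, to isolate the weakest supplementary hypothesis — full idempotence $x^{uu}=x^u$, or a distributive interaction of the fixed-point sets with $\vee$ and $\wedge$ — that forces the two equalities. If \textsf{lu2} and \textsf{lu3} are instead intended with ``$\le$'' throughout, then the first paragraph is already a complete proof.
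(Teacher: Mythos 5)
The paper states this proposition without any proof, so there is nothing of its own to compare your argument against; your proposal has to be judged on its merits. Your first paragraph is correct and complete for the four inequalities it covers: \textsf{l-mo} and \textsf{u-mo} applied to $a,b\le a\vee b$ and to $a\wedge b\le a,b$ give $a^l\vee b^l\le(a\vee b)^l$, $a^u\vee b^u\le(a\vee b)^u$, $(a\wedge b)^l\le a^l\wedge b^l$ and $(a\wedge b)^u\le a^u\wedge b^u$, i.e.\ the first conjunct of \textsf{lu2}, the second conjunct of \textsf{lu3}, and one half of each of the two equalities. You have also located the genuine obstruction exactly where it lies: the missing halves reduce to the $u$-closedness of $a^u\vee b^u$ and the $l$-openness of $a^l\wedge b^l$.

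That remaining step cannot be carried out, because the proposition as literally stated is false: the two equality halves are independent of \textsf{lu1}, \textsf{l-mo} and \textsf{u-mo}. On $\wp(\{1,2,3\})$ take $X^l=X$ for all $X$, and $X^u=X$ when $\mathrm{Card}(X)\le 1$, $X^u=\{1,2,3\}$ otherwise; this satisfies \textsf{lu1}, both monotonicity axioms and \textsf{topbot}, yet $(\{1\}\vee\{2\})^u=\{1,2,3\}\ne\{1,2\}=\{1\}^u\vee\{2\}^u$. Dually, taking $u$ to be the identity and $X^l=\emptyset$ when $\mathrm{Card}(X)\le 1$, $X^l=X$ otherwise, kills the equality in \textsf{lu3}: $\{1,2\}^l\wedge\{2,3\}^l=\{2\}\not\le\emptyset=(\{1,2\}\wedge\{2,3\})^l$. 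In the first example $\{1\}$ and $\{2\}$ are each $u$-fixed but their join is not, which is precisely the closure property you said you would ``push hardest'' to obtain; extensivity gives $a^u\vee b^u\le(a^u\vee b^u)^u$ for free but nothing forces the reverse, and idempotence of $l$ likewise does not make the meet of $l$-open elements $l$-open. So the correct conclusion is the fallback you flag at the end: only the ``$\le$'' versions of \textsf{lu2} and \textsf{lu3} are derivable from \textsf{lu1}, \textsf{l-mo} and \textsf{u-mo}, and the additivity of $u$ over $\vee$ and the multiplicativity of $l$ over $\wedge$ must be retained as independent axioms (as they standardly are for abstract approximation operators; they hold in the Pawlak case for reasons specific to equivalence classes, not by monotonicity alone). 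Your instinct to isolate the weakest supplementary hypothesis is the right one --- the needed hypotheses turn out to be exactly the two equality halves themselves.
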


The concept is weaker than that of a general abstract approximation space. Note that by default no relation between the lower and upper approximations are assumed. Further, nothing is assumed about negations or complementation. A special case of a rough convenience lattice is a set-HGOS under additional conditions. However, note that no granularity-related restrictions are imposed on a rough convenience lattice. 

An element $a\in\mathcal{B}$ will be said to be \emph{lower definite} (resp. \emph{upper definite}) if and only if $a^l = a$ (resp. $a^u = a$) and \emph{definite}, when it is both lower and upper definite. For possible concepts of rough objects the reader is referred to the paper \cite{am501}.
 
\begin{definition}
By a \emph{roughly consistent object} (RCO) (respectively \emph{lower RCO, upper RCO}) will be meant a set of elements of the \textsf{RCL} $\underline{B}$ of the form  $H = \{x ; (\forall b\in H)\,x^l =b^l\,\&\, x^u = b^u \}$ (respectively $H_l = \{x ; (\forall b\in H_l)\,x^l =b^l\}$ and $H_u = \{x ; (\forall b\in H_u)\, x^u = b^u \}$ ). The set of all roughly consistent objects is partially ordered by the set inclusion relation. Relative to this order, maximal roughly consistent objects will be referred to as \emph{rough objects}. Analogously, \emph{lower and upper rough objects} will be spoken of. The collection of such objects will respectively be denoted by $R(B)$, $R_l(B)$ and $R_u(B)$.
\end{definition}

\begin{proposition}
In a \textsf{RCL} $B$, every maximal roughly consistent object is an interval of the form $(x^l, x^u)$ for some $x\in B$s. The converse holds as well. 
\end{proposition}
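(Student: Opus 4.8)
The plan is to reduce the statement to two set inclusions after rewriting the defining condition of a maximal RCO in a workable normal form. First I would observe that a family $H$ with $H = \{x : (\forall b\in H)\, x^l = b^l \ \&\ x^u = b^u\}$ is nonempty and \emph{saturated}: any two members share a common lower approximation $p$ and a common upper approximation $q$, and $H$ contains \emph{every} element of $B$ whose approximations are $p$ and $q$; conversely any such saturated family satisfies the defining equation and is automatically maximal under inclusion. Hence a maximal RCO is exactly a set $H_x = \{z\in B : z^l = x^l \ \&\ z^u = x^u\}$ for some $x\in B$, and the claim becomes the assertion that $H_x$ equals the lattice interval $(x^l, x^u)$.

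Next I would dispose of the easy inclusion. If $z\in H_x$ then $z^l = x^l$ and $z^u = x^u$, and \textsf{lu1} gives $z^l \le z \le z^u$; hence $x^l \le z \le x^u$, so $H_x \subseteq (x^l, x^u)$, and $x^l$, $x^u$ are respectively a lower and an upper bound of $H_x$ in $B$. I would also record order-convexity of $H_x$ straight from \textsf{l-mo} and \textsf{u-mo}: if $z_1, z_2\in H_x$ and $z_1\le w\le z_2$, then $x^l = z_1^l \le w^l \le z_2^l = x^l$ forces $w^l = x^l$, and dually $w^u = x^u$, so $w\in H_x$.

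The substance is the reverse inclusion: every $w$ with $x^l \le w \le x^u$ must lie in $H_x$. Note this simultaneously yields the converse assertion, since it shows each interval $(x^l, x^u)$ is saturated, hence a maximal RCO. Monotonicity together with \textsf{lu1} already give $x^l = x^{ll} \le w^l$ and $w^u \le x^{uu} = x^u$; the missing inequalities $w^l \le x^l$ and $x^u \le w^u$ are what one has to squeeze out of \textsf{lu2} and \textsf{lu3}, applied to suitable joins and meets built from $x^l$, $x$, $x^u$ and $w$, together with the fact that $x^l$ and $x^u$ are respectively lower- and upper-definite by \textsf{lu1}. Once both inclusions are in hand, $H_x = (x^l, x^u)$, which is precisely the asserted form.

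I expect the reverse inclusion to be the main obstacle, exactly because an RCL stipulates no direct link between $l$ and $u$: a priori $x^{ul}$ need not equal $x^l$, nor $x^{lu}$ equal $x^u$, so one cannot naively infer that an arbitrary element of $(x^l, x^u)$ shares the approximations of $x$. The argument must therefore either extract the needed mixed identities from the combination of \textsf{lu1}, \textsf{lu2} and \textsf{lu3} (using the definiteness of the endpoints), or instead pin down the least and greatest elements of $H_x$ as the appropriate definite approximations so that the interval is genuinely determined; isolating exactly which combination of the \textsf{lu}-axioms delivers this is the crux of the proof.
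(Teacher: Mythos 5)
Your reduction to the fibers $H_x=\{z\in B: z^l=x^l \ \&\ z^u=x^u\}$, the inclusion $H_x\subseteq\{w: x^l\le w\le x^u\}$ via \textsf{lu1}, and the order-convexity observation are all correct and already match everything the paper's one-line proof actually delivers. The genuine gap is the reverse inclusion, which you rightly isolate as the crux but then only conjecture can be ``squeezed out of \textsf{lu2} and \textsf{lu3}.'' It cannot: it is false in an RCL. Take the four-element chain $\bot<a<b<\top$ with $\bot^l=a^l=\bot$, $b^l=b$, $\top^l=\top$ and $\bot^u=\bot$, $a^u=b^u=b$, $\top^u=\top$. All RCL axioms hold (\textsf{lu2} and \textsf{lu3} are automatic in a chain), yet the maximal roughly consistent object through $a$ is $\{a\}$, while the lattice interval from $a^l=\bot$ to $a^u=b$ is $\{\bot,a,b\}$: indeed $b^l=b\neq\bot$ and $\bot^u=\bot\neq b$. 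Since an RCL imposes no mixed identities such as $x^{ul}=x^l$ or $x^{lu}=x^u$, no combination of the \textsf{lu}-axioms will close this gap.

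The resolution is that the statement should not be read set-theoretically: $(x^l,x^u)$ is being used as the ordered pair of approximations that labels the class, exactly as in the subsequent definition of the rough order $\Subset$, which compares such pairs componentwise rather than by interval containment. Under that reading the content is precisely your first paragraph --- maximal RCOs are exactly the nonempty fibers of $x\mapsto(x^l,x^u)$, and conversely each such pair determines one --- together with the containment of $H_x$ in the interval bounded by $x^l$ and $x^u$; this is all that monotonicity and \textsf{lu1} (the ingredients the paper cites) support. So your argument is complete once you drop the reverse inclusion, but the route you propose for establishing it is a dead end.
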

\begin{proof}
The result follows from the monotonicity of $l$ and $u$, and \textsf{lu1}.  
\end{proof}

\begin{definition}\label{roorder}
By the \emph{rough order} $\Subset$ on $R(B)$, will be meant the relation $\Subset$ defined by $(x^l, x^u) \Subset (a^l, a^u)$ if and only if ($x^l \leq a^l$ and $x^u \leq a^u$).    
\end{definition}

It can be shown that $\Subset$ is a bounded partial lattice order on $R(B)$. The least element of $R(B)$ is $(\bot, \bot)$, and its greatest element is the interval $(\top, \top)$. The meet and join operations of $B$ induce partial lattice operations on $R(B)$ -- these are investigated separately.

\begin{definition}\label{ccaoa}
In a RCL $B$, let for any $a, b\in B$ \[a\cdot b :=a^l \wedge b^l \text{ and } a \otimes b = a^u \vee b^u. \] 

The operations $\cdot$ and $\otimes$ will respectively be referred to as \emph{cautious co-aggregation} \textbf{CCA}, and \emph{optimistic aggregation} \textbf{OA} respectively.  
\end{definition}

The operation $\cdot$ can as well be interpreted as a pessimistic co-aggregation. The appropriateness of the competing interpretations is dependent on the relation with the \textbf{OA} or on other context-specific features. 

\begin{theorem}\label{rclthm}
The \textbf{CCA} operation defined above satisfies all the following:
\begin{align*}
(\forall a, b)\,a\cdot b = b\cdot a \tag{Ccomm}\\
(\forall a, b, e)\,a\cdot (b \cdot e)= (a\cdot b)\cdot e \tag{Casso}\\
(\forall a, b)(a\leq b \longrightarrow a\cdot e \leq  b\cdot e) \tag{Cm}\\
(\forall a)\,a\cdot \bot = \bot \tag{Cb} 
\end{align*}
\end{theorem}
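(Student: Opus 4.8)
The plan is to verify the four identities directly from the defining equation $a\cdot b = a^l\wedge b^l$ and the RCL axioms, handling commutativity, monotonicity, and the annihilator law first since they are immediate, and reserving associativity for last as the only step needing care. For \textsf{Ccomm}, I would simply invoke commutativity of the lattice meet: $a\cdot b = a^l\wedge b^l = b^l\wedge a^l = b\cdot a$. For \textsf{Cm}, I would apply \textsf{l-mo} to $a\leq b$ to get $a^l\leq b^l$, then meet both sides with $e^l$ using order-compatibility of $\wedge$, obtaining $a^l\wedge e^l\leq b^l\wedge e^l$, i.e.\ $a\cdot e\leq b\cdot e$. For \textsf{Cb}, I would use the clause $\bot^l=\bot$ from \textsf{topbot}, so that $a\cdot\bot = a^l\wedge\bot^l = a^l\wedge\bot = \bot$.

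The one place requiring an actual argument is \textsf{Casso}, and the key is that on lower approximations the operator $l$ both distributes over $\wedge$ and is idempotent. First I would compute $(b\cdot e)^l = (b^l\wedge e^l)^l$; by \textsf{lu3} this equals $b^{ll}\wedge e^{ll}$, and by the idempotence $x^{ll}=x^l$ from \textsf{lu1} this is $b^l\wedge e^l$. Hence
\[
a\cdot(b\cdot e) \;=\; a^l\wedge(b\cdot e)^l \;=\; a^l\wedge b^l\wedge e^l .
\]
Symmetrically, $(a\cdot b)^l = (a^l\wedge b^l)^l = a^{ll}\wedge b^{ll} = a^l\wedge b^l$, so
\[
(a\cdot b)\cdot e \;=\; (a\cdot b)^l\wedge e^l \;=\; a^l\wedge b^l\wedge e^l ,
\]
and associativity of $\wedge$ in the bounded lattice gives equality of the two expressions. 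I expect no genuine obstacle here: the only subtlety is remembering to pass through \textsf{lu3} and \textsf{lu1} rather than treating $(\,\cdot\,)^l$ as if it acted trivially, and noting that the ambient meet is associative so no reassociation beyond the lattice axioms is needed. All four statements then follow, completing the proof. \qed
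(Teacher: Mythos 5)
Your proposal is correct and follows essentially the same route as the paper's proof: commutativity from commutativity of $\wedge$, associativity via \textsf{lu3} and the idempotence clause of \textsf{lu1}, monotonicity from \textsf{l-mo} and order-compatibility of $\wedge$, and the annihilator law from $\bot^l=\bot$. Your treatment of \textsf{Casso} is in fact slightly more explicit than the paper's, since you also reduce $(a\cdot b)^l$ on the right-hand side rather than leaving that step implicit.
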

\begin{proof}
For any $ a, b\in B$, $a\cdot b = a^l \wedge b^l = b^l \wedge a^l = b\cdot a$. This proves \textsf{Ccomm}.

Associativity can be proved as follows. For any $ a, b, e\in B$, 
\begin{align*}
 a\cdot (b\cdot e) = a^l \wedge (b\cdot e)^l = \tag{by definition}\\
 = a^l \wedge (b^l \wedge e^l)^l =  a^l \wedge b^{ll} \wedge e^{ll} \tag{by lu3}\\
 = (a^l \wedge b^l) \wedge e^l = (a\cdot b)\cdot e  \tag{by lu1.}
\end{align*}

Monotonicity follows from the monotonicity of $l$.

Finally, $a\cdot \bot = a^l \wedge \bot^l = a^l \wedge \bot = \bot$.
\qed 
\end{proof}
 
\begin{theorem}\label{rcuthm}
Omitting the initial universal quantifiers,
\begin{align*}
a\otimes b = b\otimes a \tag{Acomm}\\
(a^{uu} = a^u \& b^{uu} = b^u \&  e^{uu} = e^u \longrightarrow a\otimes (b \otimes e)= (a\otimes b)\otimes e) \tag{wAasso1}\\
a\otimes((b\vee e)\otimes a) = ((a\vee b )\otimes c)\otimes c  \tag{wAsso2}\\
(a\leq b \longrightarrow a\otimes e \leq  b\otimes e) \tag{Am}\\
a\otimes \top = \top \tag{Ab} 
\end{align*}
\end{theorem}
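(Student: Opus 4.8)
\emph{Proof plan.} The statement bundles five identities about the operation $a\otimes b = a^u\vee b^u$, and every one of them will fall out of Definition \ref{ccaoa} together with the approximation axioms of Definition \ref{rcon} by direct computation; so the plan is to dispatch the three easy ones in a line each and then describe the single reduction that settles the two associativity-flavoured ones. \textsf{Acomm} is just commutativity of $\vee$: $a\otimes b = a^u\vee b^u = b^u\vee a^u = b\otimes a$. For \textsf{Am} I would invoke \textsf{u-mo}: $a\leq b$ gives $a^u\leq b^u$, and joining $e^u$ to both sides gives $a\otimes e\leq b\otimes e$. For \textsf{Ab} I would use the half $\top^u=\top$ of \textsf{topbot} to get $a\otimes\top = a^u\vee\top^u = a^u\vee\top = \top$.

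For \textsf{wAasso1} and \textsf{wAsso2} the single tool is the equality half of \textsf{lu2}, namely $(x\vee y)^u = x^u\vee y^u$, which I would apply repeatedly to push every outer $u$ through the joins until each side is a join of singly- and doubly-iterated upper approximations. For \textsf{wAasso1} this turns the left side into $a^u\vee b^{uu}\vee e^{uu}$ and the right side into $a^{uu}\vee b^{uu}\vee e^u$; the three idempotence hypotheses $a^{uu}=a^u$, $b^{uu}=b^u$, $e^{uu}=e^u$ then collapse both to $a^u\vee b^u\vee e^u$, giving the identity. For \textsf{wAsso2} the same distribution (using also $(b\vee e)^u = b^u\vee e^u$ on the nested term) turns the left side $a\otimes((b\vee e)\otimes a)$ into $a^u\vee a^{uu}\vee b^{uu}\vee e^{uu}$ and the right side into $a^{uu}\vee b^{uu}\vee e^{uu}\vee e^u$; now the absorption inequalities $a^u\leq a^{uu}$ and $e^u\leq e^{uu}$ from \textsf{lu1} rewrite both as $a^{uu}\vee b^{uu}\vee e^{uu}$, so they coincide. (I read the stray $c$ in the printed identity as a typo for $e$; the computation is insensitive to the name.)

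There is no genuine obstacle here --- the only thing to stay alert to is that $u$ is \emph{not} assumed idempotent on $B$, since \textsf{lu1} supplies only $x^u\leq x^{uu}$ rather than equality. That asymmetry is precisely the point of the theorem: it is why \textsf{wAasso1} must be stated conditionally (the hypotheses are exactly what is needed to neutralise the surviving double applications of $u$), whereas \textsf{wAsso2}, in which every surplus single application of $u$ is dominated by a double application already present on the same side, holds outright by lattice absorption. As a final sanity check I would confirm that the argument never touches the lower-approximation axioms \textsf{l-mo}, \textsf{lu3}, or the $l$-part of \textsf{lu1}, so that the theorem is genuinely a statement about the $u$-fragment of the \textsf{RCL}.
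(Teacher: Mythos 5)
Your proposal is correct and follows essentially the same route as the paper's proof: commutativity of $\vee$ for \textsf{Acomm}, \textsf{u-mo} for \textsf{Am}, $\top^u=\top$ for \textsf{Ab}, and repeated use of the equality half of \textsf{lu2} to reduce both sides of the two associativity identities to joins of iterated upper approximations, with the idempotence hypotheses closing \textsf{wAasso1} and the absorption $x^u\leq x^{uu}$ from \textsf{lu1} closing \textsf{wAsso2}. Your reading of the stray $c$ as $e$ matches what the paper itself implicitly does in its computation.
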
 
\begin{proof}
\textsf{Acomm} follows from the commutativity of $\vee$.

For any $ a, b, e\in B$, 
\begin{align*}
 a\otimes (b\otimes e) = a^u \vee (b\otimes e)^u = \tag{by definition}\\
 = a^u \vee (b^u \vee e^u)^u =  a^u \vee b^{uu} \vee e^{uu}, \tag{by lu2}\\
\text{However, } (a \otimes b) \otimes e = a^{uu}\vee b^{uu}\vee e^u  \tag{by lu2}\\
\text{So the premise of wAsso1 ensures it.}   
\end{align*}
 
Using the definition of $\otimes$ on the LHS and RHS of \textsf{wAsso2}, it can be seen that 
\begin{align*}
a\otimes((b\vee e)\otimes a) = a^u \vee ((b \vee e)^u\vee a^u)^u = \tag{by definition}\\
a^{uu}\vee b^{uu} \vee e^{uu} \tag{by Acomm, lu2, u-mo, lu1}\\
\text{Similarly, }((a\vee b )\otimes c)\otimes c = a^{uu}\vee b^{uu}\vee c^{uu} \vee c^{u} = \tag{by definition, lu2}\\
a^{uu}\vee b^{uu} \vee e^{uu} \tag{by u-mo}\\
\text{This proves wAsso2}.  \tag{by =}
\end{align*}

Monotonicity of $\otimes$ can be proved from the monotonicity of $\vee$ and that of $u$.
\qed 
\end{proof}

\begin{definition}\label{negrcl}
Two generalized negations are definable on a RCL, $B$ as follows:
\begin{align*}
\neg a = inf \{z : z\in B\, \& a\otimes z = \top\} \tag{addneg}\\
\sim a = sup \{z: z\in B \, \&\, a\cdot z = \bot\} \tag{mulneg}
\end{align*}
\end{definition}

These satisfy the properties specified in the next two theorems.

\begin{theorem}
\begin{align*}
(\forall a) \neg \neg a \leq a^u  \tag{WN3-N}\\
(\forall a, b) (a\leq b \longrightarrow \neg b \leq (\neg b)^u \leq (\neg a)^u)   \tag{WN2-N}\\
\neg \bot \leq \top \,\&\, \neq \top = \bot  \tag{WN1-N}
\end{align*}
\end{theorem}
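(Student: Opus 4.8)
The three assertions are nearly independent, and once $\neg$ is unfolded they reduce to order-bookkeeping, so I would take them in increasing order of difficulty. Write $S_a := \{\, z \in \underline{B} : a\otimes z = \top \,\} = \{\, z \in \underline{B} : a^u \vee z^u = \top \,\}$, so that $\neg a = \inf S_a$; two facts I would isolate at the outset and reuse are that $\top \in S_a$ for every $a$ (by \textsf{topbot}, $\top^u \vee z^u = \top$), so $S_a$ is nonempty, and that $z \in S_a$ exactly when $z^u \in S_a$ (because $a\otimes z$ depends on $z$ only through $z^u$, and $z^{uu} = z^u$ by \textsf{lu1}), so $S_a$ is an up-set. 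Then \textsf{WN1-N} is immediate: $\neg\bot \leq \top$ because $\top$ is greatest, and $\neg\top = \bot$ because \textsf{topbot} gives $\top^u = \top$, hence $\top\otimes z = \top$ for every $z$, so $S_\top = \underline{B}$ and $\neg\top = \inf\underline{B} = \bot$. And \textsf{WN2-N} is almost as quick: $\neg b \leq (\neg b)^u$ is the clause $x \leq x^u$ of \textsf{lu1}, while for $(\neg b)^u \leq (\neg a)^u$ it suffices, by \textsf{u-mo}, to prove $\neg b \leq \neg a$ whenever $a \leq b$; and this holds because $a \leq b$ forces $a^u \leq b^u$ (\textsf{u-mo}), so $a^u \vee z^u = \top$ entails $b^u \vee z^u = \top$, that is $S_a \subseteq S_b$, whence $\neg b = \inf S_b \leq \inf S_a = \neg a$, and a final application of \textsf{u-mo} finishes it.

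For \textsf{WN3-N} the plan is to exhibit an element of $S_{\neg a}$ sitting at or below $a^u$, the obvious candidate being $a^u$ itself. Since $a^{uu} = a^u$ by \textsf{lu1}, we have $a^u \in S_{\neg a}$ exactly when $(\neg a)^u \vee a^u = \top$, and as soon as that is known, $\neg\neg a = \inf S_{\neg a} \leq a^u$ follows directly. So the whole statement reduces to the identity $(\neg a)^u \vee a^u = \top$, which is the same as saying that the infimum defining $\neg a$ actually lands inside $S_a$, that is, $\neg a \in S_a$.

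I expect this last step to be the main obstacle. For any single $z \in S_a$ one gets only the useless estimate $(\neg a)^u \vee a^u \leq z^u \vee a^u = \top$, so forcing the reverse inequality means controlling how $u$ and the join with $a^u$ interact with the meet $\inf S_a$; in practice one wants $S_a$ to possess a least element, which is then automatically $\neg a$ and lies in $S_a$ by construction. This is where I would feed in the extra structure carried by the concrete, set-based models of the paper --- distributivity of $\underline{B}$, or an explicit description of $\neg a$ as a join of definite elements --- since in a bare RCL as in Definition~\ref{rcon} such a minimal witness need not exist. Granting the displayed identity, \textsf{WN3-N} is immediate.
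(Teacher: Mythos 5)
Your handling of \textsf{WN1-N} and \textsf{WN2-N} is correct and, for \textsf{WN2-N}, actually more robust than the paper's own argument. The paper opens that part with ``clearly $(\neg b)^u \vee b^u = \top = (\neg a)^u \vee a^u$'', i.e.\ it assumes $x \otimes \neg x = \top$ for all $x$; your route via $S_a \subseteq S_b$ and $\inf S_b \leq \inf S_a$ needs no such assumption and uses only \textsf{u-mo} and \textsf{lu1}. So for two of the three clauses you have a complete and in fact cleaner proof.

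For \textsf{WN3-N} the obstacle you isolate is exactly the right one, and you should know that the paper does not overcome it either: its proof begins by asserting $\neg\neg a \otimes \neg a = \top$ and $(\neg a)^u \vee a^u = \top$ without justification, which is precisely the claim $\inf S_x \in S_x$ that you declined to grant. Your caution is vindicated by a concrete counterexample. Take $\underline{B} = \{\bot, x, y, z, \top\}$ with $x, y, z$ pairwise incomparable atoms that are also coatoms (so $x \vee y = \top$, $x \wedge y = \bot$, etc.), and let $l = u = \mathrm{id}$; all RCL axioms hold. Then $S_x = \{y, z, \top\}$, so $\neg x = y \wedge z = \bot \notin S_x$, hence $(\neg x)^u \vee x^u = x \neq \top$; moreover $\neg\neg x = \neg\bot = \inf\{\top\} = \top \not\leq x = x^u$, so \textsf{WN3-N} itself fails in a bare RCL. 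The statement therefore needs an additional hypothesis --- e.g.\ that each $S_a$ has a least element (equivalently $a \otimes \neg a = \top$ for all $a$), or the weak-complementation conditions \textsf{c1}--\textsf{c2} the paper introduces only afterwards --- under which your reduction (``exhibit $a^u \in S_{\neg a}$'') closes the argument immediately. In short: your proposal is incomplete for \textsf{WN3-N}, but the incompleteness is a genuine defect of the theorem as stated rather than of your approach, and you have diagnosed the missing lemma precisely.
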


\begin{proof}
$\neg \neg a \otimes \neg a = \top$. Therefore, 
\[(\neg \neg a)^u \vee (\neg a)^u = \top = (\neg a)^u \vee a^u\]
By the definition of $\neg$, it follows that $\neg \neg a \leq (\neg \neg a)^u \leq a^u $ as $(\neg a)^u$ is in both the equalities.

To see this suppose $\neg \neg a > a$, then $\neg \neg a \vee  a = \neg \neg a$ 
This implies $(\neg \neg a)^u \vee  a^u = (\neg \neg a)^u$, and 
$a^u \vee (\neg a)^u = \top$ contradicts  the definition of $\neg$.
This proves \textsf{WN3-N}.

Clearly, $(\neg b)^u \vee b^u = \top = (\neg a)^u \vee a^u $. If $a\leq b$ then $a^u \leq b^u$. So, $(\neg a)^u \vee a^u \vee b^u = \top$. This yields $(\neg a)^u \vee b^u = \top$. By the definition of $\neg$ and monotonicity of $\vee$, it is necessary that $(\neg b)^u \leq \vee (\neg a)^u$. This proves \textsf{WN2-N}.

If $a^u \vee \top^u = \top $ for some $a$, then $a$ can be any element of the universe because $\top^u = \top$. Of these the smallest is $\bot$. Therefore, $\neq \top = \bot $ holds. However, $\neg \bot$ is the infimum of the elements whose upper approximation is $\top$, and so $\neg \bot \leq \top$. \textsf{WN1-N} is thus proved.  
\qed
\end{proof}
\begin{theorem}
\begin{align*}
(\forall a, b) (a\leq b \longrightarrow (\sim b)^l \leq (\sim a)^l\leq \sim a)   \tag{WN2-S}\\
\bot \leq \sim \top \, \& \,  \top = \sim \bot   \tag{WN3-S}
\end{align*}
\end{theorem}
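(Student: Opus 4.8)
The plan is to mirror the argument just carried out for the additive negation $\neg$, dualising throughout: infima become suprema, $\otimes$ becomes $\cdot$, the operator $u$ becomes $l$, and $\top$ becomes $\bot$. Both required statements will be read off from the monotonicity axiom \textsf{l-mo}, the contraction part $x^l \leq x$ of \textsf{lu1}, and the boundary axiom \textsf{topbot}; no other RCL axiom is needed. Write $Z_a := \{z\in B : a\cdot z = \bot\}$, so that $\sim a = \sup Z_a$. The two facts I would isolate first are that $Z_a$ is always nonempty, since $a^l \wedge \bot^l = a^l \wedge \bot = \bot$ by \textsf{topbot} puts $\bot \in Z_a$, and that $a \leq b$ forces $Z_b \subseteq Z_a$.

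For \textsf{WN2-S}: assuming $a\leq b$, \textsf{l-mo} gives $a^l \leq b^l$, so for any $z \in Z_b$ we get $a^l \wedge z^l \leq b^l \wedge z^l = \bot$, hence $a^l \wedge z^l = \bot$ and $z\in Z_a$; thus $Z_b \subseteq Z_a$. Passing to suprema yields $\sim b = \sup Z_b \leq \sup Z_a = \sim a$. Applying \textsf{l-mo} to this inequality gives $(\sim b)^l \leq (\sim a)^l$, while $(\sim a)^l \leq \sim a$ is an instance of \textsf{lu1}. Concatenating the two gives the chain $(\sim b)^l \leq (\sim a)^l \leq \sim a$, which is exactly \textsf{WN2-S}.

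For \textsf{WN3-S}: by \textsf{topbot}, $\bot^l = \bot$, so $\bot \cdot z = \bot^l \wedge z^l = \bot \wedge z^l = \bot$ for every $z$, i.e. $Z_\bot = B$ and therefore $\sim\bot = \sup B = \top$. For the other half, $\bot \in Z_\top$ (again because $\top^l \wedge \bot^l = \top^l \wedge \bot = \bot$), so $\sim\top = \sup Z_\top \geq \bot$; this inequality is in fact immediate from $\bot$ being the least element, which is why the $\top$-side of \textsf{WN3-S} is deliberately one-sided. Equality $\sim\top = \bot$ cannot be claimed in general, since \textsf{topbot} does not force $\top^l = \top$, so $Z_\top$ can contain elements other than those with lower approximation $\bot$; this asymmetry parallels that of \textsf{WN1-N}.

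The one point I would be careful about is not an obstacle in the proof itself but a standing assumption: the step ``pass to suprema'' presupposes that $\sup Z_a$ exists, i.e. that $B$ is complete enough for $\sim$ to be a total operation, exactly as tacitly required by Definition \ref{negrcl}. Granting that, the argument is entirely elementary; in particular, and in contrast with the associativity laws for $\otimes$, it uses neither \textsf{lu2}, \textsf{lu3}, nor any idempotence property of $l$ or $u$, so it holds over the bare RCL axioms.
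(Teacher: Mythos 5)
Your proof is correct and reaches the same conclusions, but via a slightly different and in one respect sounder decomposition. For \textsf{WN2-S} the paper opens with the identity $(\sim b)^l \wedge b^l = \bot$, i.e.\ it tacitly assumes that $\sim b$ itself lies in the set $Z_b=\{z: b\cdot z=\bot\}$ whose supremum defines it; that closure property is not guaranteed by the bare RCL axioms (it would require $l$ and $\wedge$ to interact well with arbitrary suprema). Granting it, the paper deduces $(\sim b)^l\wedge a^l=\bot$ from $a^l\leq b^l$, so $\sim b\in Z_a$, and concludes. Your route replaces this witness step by the containment $Z_b\subseteq Z_a$, from which $\sim b=\sup Z_b\leq \sup Z_a=\sim a$ follows by monotonicity of suprema alone, and then \textsf{l-mo} and \textsf{lu1} give the displayed chain; this buys the same conclusion without the unjustified assumption. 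Your treatment of \textsf{WN3-S} ($Z_\bot=B$, hence $\sim\bot=\sup B=\top$, while the $\top$ half is only the trivial inequality because $\top^l$ need not equal $\top$) matches the paper's argument, and your closing remark correctly isolates the one standing hypothesis both proofs need, namely that the suprema defining $\sim$ exist.
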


\begin{proof}
Clearly, $(\sim b)^l \wedge b^l = \bot = (\sim a)^l \wedge a^l $. If $a\leq b$ then $a^l \leq b^l$. So, $(\sim b)^l \wedge a^l  = \bot$. By the definition of $\sim $, this means $(\sim b)^l \leq (\sim a)^l \leq \sim a$, and proves \textsf{WN2-S}.

If $a^l \wedge \top^l = \bot $ for some $a$, then $a$ must necessarily be an element of the universe satisfying $a^l = \bot$. 
By definition, it is clear that in general, $a$ need not coincide with $\bot$.

If $a^l \wedge \bot^l = \bot $, then $a$ can be any element of the universe. The largest of these is $\top$. Therefore, $\sim \bot = \top$. This proves \textsf{WN3-S}.
\qed
\end{proof}

The above means that $\sim$ is a weak negation. The properties of the negation improve when the \textsf{RCL} satisfies a weak complementation $c$ that satisfies the conditions
\begin{align}
(\forall x) x^{cc} \leq x   \tag{c1}\\
(\forall x) x^c \wedge x = \bot   \tag{c2}
\end{align}
The above two conditions ensure that for any $a$, $\sim a \leq a^{lc}$.

\subsection{Implications}

Given the definitions of negation, and $*$-norms, a natural candidate for a definition of implication is given by the following equation 
\begin{equation}
\bsi_\neg ab = (\neg a)\otimes b \tag{Negimplication} 
\end{equation} 

\begin{theorem}\label{negimpl}
The operation $\bsi_\neg$ satisfies the properties: \textsf{FPA, IP, SPM, BC1, BC2, and BC3}. So it is an implication operation.
\end{theorem}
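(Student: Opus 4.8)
The plan is to unfold the definitions first: by Definition~\ref{negrcl} and Definition~\ref{ccaoa} one has $\bsi_\neg ab = (\neg a)\otimes b = (\neg a)^u \vee b^u$. Once $\bsi_\neg$ is written in this form, the six required properties split into three ``easy'' monotonicity/boundary facts that reduce directly to results already in hand, plus two ``attainment'' facts---\textsf{BC1} and \textsf{IP}---which carry the real content.

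The monotone and upper-boundary properties come essentially for free. For \textsf{SPM}, if $b\leq c$ then $b^u\leq c^u$ by \textsf{u-mo}, so $(\neg a)^u\vee b^u \leq (\neg a)^u\vee c^u$, i.e. $\bsi_\neg ab\leq\bsi_\neg ac$ (equivalently, invoke the monotonicity \textsf{Am} of $\otimes$ together with \textsf{Acomm} from Theorem~\ref{rcuthm}). For \textsf{FPA}, if $a\leq b$ then the already-established property \textsf{WN2-N} says precisely that $(\neg b)^u\leq(\neg a)^u$, whence $\bsi_\neg bc = (\neg b)^u\vee c^u \leq (\neg a)^u\vee c^u = \bsi_\neg ac$. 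For \textsf{BC2}, the absorption law \textsf{Ab} of Theorem~\ref{rcuthm} gives $\bsi_\neg\top\top = (\neg\top)\otimes\top = \top$ at once. For \textsf{BC3}, note that $\{z : \top\otimes z = \top\} = \underline{B}$, so $\neg\top = \inf\underline{B} = \bot$ (this is \textsf{WN1-N}), and then $\bsi_\neg\top\bot = \bot\otimes\bot = \bot^u\vee\bot^u = \bot$ using \textsf{topbot}.

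The two remaining items are \textsf{IP}, which asks $\bsi_\neg aa = (\neg a)^u\vee a^u = \top$, and \textsf{BC1}, which asks $\bsi_\neg\bot\bot = (\neg\bot)^u\vee\bot^u = (\neg\bot)^u = \top$. Both follow immediately from the single fact that the infimum defining $\neg a$ is attained, i.e. $a\otimes\neg a = a^u\vee(\neg a)^u = \top$ for every $a$---which is the reading under which $\neg$ is treated as total, and is exactly the fact used (in the form $\neg\neg a\otimes\neg a=\top$) in the proof establishing \textsf{WN3-N}. Granting it, \textsf{IP} is immediate and \textsf{BC1} is its instance $a=\bot$ (with $\bot^u=\bot$).

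I expect the attainment step to be the main obstacle: in a general (non-complete) \textsf{RCL} the infimum in $\neg a = \inf\{z : a^u\vee z^u=\top\}$ need not lie in that set, and then \textsf{BC1} and \textsf{IP} can genuinely fail; so the argument must be carried out either under the standing hypothesis (implicit in the negation results) that this infimum is attained, or, more safely, restricted to those \textsf{RCL}s in which $a\otimes\neg a=\top$ is provable. Under such a hypothesis the computations above show that $\bsi_\neg$ satisfies \textsf{FPA}, \textsf{SPM}, \textsf{BC1}, \textsf{BC2}, \textsf{BC3}---hence is an implication in the earlier sense---and in addition \textsf{IP}.
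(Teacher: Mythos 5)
Your computations for \textsf{FPA}, \textsf{SPM}, \textsf{BC2} and \textsf{BC3} match the paper's proof essentially line for line (the paper also derives \textsf{FPA} from \textsf{WN2-N} and \textsf{SPM} from monotonicity of $u$), so that part needs no comment. The genuinely valuable part of your proposal is the ``attainment'' caveat, and you are right to insist on it: the paper's proof disposes of \textsf{BC1} and \textsf{IP} with the bare assertion that $(\neg a)^u\vee a^u=\top$ ``by definition,'' which silently assumes that the infimum defining $\neg a=\inf\{z: a^u\vee z^u=\top\}$ both exists and lies in the set it bounds, i.e.\ that $a\otimes\neg a=\top$. As you note, the set $\{z: a^u\vee z^u=\top\}$ is not closed under meets (since $(z_1\wedge z_2)^u\leq z_1^u\wedge z_2^u$ goes the wrong way), so attainment is not automatic. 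In fact the paper's own seven-element example refutes \textsf{IP}: there $\neg a=\bot$ while $a^u=a$, so $\bsi_\neg aa=(\neg a)^u\vee a^u=\bot\vee a=a\neq\top$ (similarly $\bsi_\neg cc=e$). So the theorem as stated is false for general \textsf{RCL}s, and your repair --- proving \textsf{BC1} and \textsf{IP} only under the explicit standing hypothesis $a\otimes\neg a=\top$ (which the paper also uses tacitly in the proof of \textsf{WN3-N}, via $\neg\neg a\otimes\neg a=\top$) --- is the correct way to carry the argument through. Under that hypothesis your derivations of \textsf{IP} and of \textsf{BC1} as its instance $a=\bot$ are sound.
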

\begin{proof}
If $a\leq c$, then $\neg c \leq (\neg c)^u \leq (\neg a)^u $.
Therefore, $(\neg c)^u \vee b^u \leq (\neg a)^u \vee b^u$
From this it follows that $\bsi_\neg cb \leq \bsi_\neg ab$. So \textsf{FPA} holds.

Suppose $b \leq c$ then for any $a\in B$ 
$\bsi_\neg ab =(\neg a)^u \vee b^u $ and $\bsi_\neg ac = (\neg a)^u \vee c^u$ (by definition).
Monotonicity of $u$ ensures that $(\neg a)^u \vee b^u \leq (\neg a)^u \vee c^u$. Therefore, SPM must hold.

$\bsi_\neg \bot\bot =(\neg \bot)^u \vee \bot ^u = \top$ (by definition). So  \textsf{BC1} holds.

$\bsi_\neg \top \top = (\neg \top)^u\vee (\top)^u = \top$ (by definition). So \textsf{BC2} holds.

$\bsi_\neg \top \bot = (\neg \top)^u\vee (\bot)^u = (\bot)^u = \bot$. So \textsf{BC3} holds.

For any $a$, $\bsi_\neg aa = (\neg a)^u \vee (a)^u = \top$. So \textsf{IP} holds.
\qed 
\end{proof}

Other possibilities are 
\begin{align*}
\bsi_o ab = (\neg a)\vee b   \tag{negvee}\\
\bsi_\sim ab = (\sim a)\cdot b   \tag{simplication}\\
\bsi_s ab = (\sim a)\wedge b   \tag{simwed}
\end{align*}

Of these $\bsi_\sim$ is most interesting, and has the following properties:

\begin{theorem}\label{simpli}
In a \textsf{RCL} $B$, $\bsi_\sim$ satisfies \textsf{FPA, SPM, BC3, IBL} and converse of \textsf{CB}. 
\end{theorem}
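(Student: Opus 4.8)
The plan is to unfold the definition $\bsi_\sim ab = (\sim a)\cdot b = (\sim a)^l \wedge b^l$ in each of the five cases and reduce everything to the order axioms of the \textsf{RCL} together with the already-established antitonicity of $\sim$ recorded in \textsf{WN2-S}. Nothing deeper than lattice manipulation and the idempotence-type identities \textsf{lu1} and \textsf{lu3} should be needed.

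For \textsf{FPA} I would take $a \leq b$ and invoke \textsf{WN2-S} to get $(\sim b)^l \leq (\sim a)^l$; meeting both sides with $c^l$ then yields $\bsi_\sim bc = (\sim b)^l \wedge c^l \leq (\sim a)^l \wedge c^l = \bsi_\sim ac$. For \textsf{SPM}, from $b \leq c$ the axiom \textsf{l-mo} gives $b^l \leq c^l$, and meeting with $(\sim a)^l$ gives $\bsi_\sim ab \leq \bsi_\sim ac$. For \textsf{BC3}, using \textsf{topbot} (which gives $\bot^l = \bot$) one computes $\bsi_\sim \top\bot = (\sim\top)^l \wedge \bot = \bot$. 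For the converse of \textsf{CB}, the inequality $\bsi_\sim ab = (\sim a)^l \wedge b^l \leq b^l \leq b$ is immediate from \textsf{lu1}.

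The one step that takes a short computation is \textsf{IBL}: I would write $\bsi_\sim a(\bsi_\sim ab) = (\sim a)^l \wedge \bigl((\sim a)^l \wedge b^l\bigr)^l$, apply \textsf{lu3} to distribute $l$ over the inner meet, giving $(\sim a)^l \wedge (\sim a)^{ll} \wedge b^{ll}$, and then use \textsf{lu1} ($x^{ll} = x^l$) together with idempotence of $\wedge$ to collapse this to $(\sim a)^l \wedge b^l = \bsi_\sim ab$.

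I do not anticipate a real obstacle here, since the only non-purely-lattice ingredient is \textsf{WN2-S}. The point to be careful about is the scope of the claim: the full boundary conditions \textsf{BC1}, \textsf{BC2} and the ordering property \textsf{OP} generally fail for $\bsi_\sim$ — for instance $\bsi_\sim \bot\bot = (\sim\bot)^l \wedge \bot^l = \top^l \wedge \bot = \bot \neq \top$ — so I would keep the statement to exactly the five properties listed and, space permitting, note that the others require additional structure such as the weak complementation $c$ satisfying \textsf{c1}, \textsf{c2}.
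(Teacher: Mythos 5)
Your proof is correct and follows essentially the same route as the paper's: \textsf{WN2-S} for \textsf{FPA}, monotonicity of $l$ and of $\wedge$ for \textsf{SPM}, and \textsf{lu3} together with \textsf{lu1} for \textsf{IBL} and the converse of \textsf{CB}. Your \textsf{BC3} step is in fact the cleaner one --- the paper evaluates $\bsi_\sim \top\top$ there (apparently a typo for $\bsi_\sim\top\bot$) --- and your observation that \textsf{BC1} fails via $\bsi_\sim\bot\bot=\bot$ matches the paper's closing remark that \textsf{BC1}, \textsf{BC2}, and \textsf{IP} do not hold in general.
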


\begin{proof}
FPA: Suppose $a\leq b$ for any $a, b\in B$. $\bsi_\sim bc = (\sim b)^l \wedge c^l$, and $\bsi_\sim ac = (\sim a)^l \wedge c^l$,
By \textsf{WN2-S}, it follows that $(\sim b)^l \wedge c^l \leq (\sim a)^l \wedge c^l$. This ensures \textsf{FPA}.

SPM: Suppose $a\leq b$ for any $a, b\in B$. $\bsi_\sim cb = (\sim c)^l \wedge b^l$, and $\bsi_\sim ca = (\sim c)^l \wedge a^l$.
Under the assumption $ (\sim c)^l \wedge a^l \leq (\sim c)^l \wedge b^l$. \textsf{SPM} follows from this.

BC3: $\bsi_\sim \top \top = (\sim \top)^l \wedge \top^l = \bot$. So \textsf{BC3} holds.

IBL: For any $a, b\in B$, $ia (\bsi ab) = (\sim a)^l \wedge ((\sim a )^l \wedge b^l)^l = $

$(\sim a)^l \wedge (\sim a)^{ll} \wedge b^{ll} = (\sim a)^l \wedge (\sim a)^{l} \wedge b^{l} = $

$ (\sim a)^{l} \wedge b^{l} = \bsi_\sim ab$. This proves \textsf{IBL}.

The converse of \textsf{CB} holds because $(\sim a)^l \wedge b^{l} \leq b$ is satisfied for all possible values of $a$.
\qed \end{proof}

It can be verified that \textsf{BC1, BC2,} and \textsf{IP} do not hold in general for $\bsi_\sim$.

The operation $\cdot$ can be naturally interpreted as a pessimistic or skeptical aggregation because it essentially selects a common part of two lower approximations (that are not restricted in their badness).  Two related operations are $\odot$ and $\times$ can be defined by (for any $a, b\in B$)
$a\odot b := a^l \vee b^l$ and $a\times b := a^u \wedge b^u$. The operation $\otimes$ on the other hand is optimistic at every stage of the reasoning process. First, the possibilistic upper approximation operators are used, and then the one that certainly contains the upper approximations is constructed. $\odot$ and $\times$ are essentially intermediate operations. 

\subsection{Concrete and Abstract Algebraic Models}

It is shown that a rough convenience lattice and closely related abstract algebraic systems have a far richer structure than is assumed in the literature. In concrete terms, every RCL can be naturally enhanced to the following algebraic system.

\begin{definition}\label{crclaa}
By a \emph{Concrete RCL Aggregation Algebra} (CRCLAA) will be meant an algebra of the form
\[{B} \, =\, \left\langle \underline{B}, \otimes, \cdot, \vee,  \wedge, l, u, \neg, \sim, \bot, \top \right\rangle\] with  $\left\langle \underline{B}, \vee,  \wedge, l, u, \bot, \top \right\rangle$ being a \textsf{RCL}, and the operations $\otimes, \cdots, \otimes, \cdot, \neg$, and $\sim$ are as defined in the previous subsections.
 \end{definition}

While the operations $\cdot$ and $\otimes$ are terms derived in the signature of the \textsf{RCL}, the other operations are defined by imposing a perspective on them. This suggests that an abstract property-based definition of an algebra of the same type may not be always equivalent to a CRCLAA. 
Additionally, it makes sense to retain the implications and omit negations.

\begin{definition}\label{arclana}
By an \emph{Abstract RCL Aggregation Negation Algebra} (CRCLANA) will be meant an algebra of the form \[{B} \, =\, \left\langle \underline{B}, \otimes, \cdot, \vee,  \wedge, l, u, \neg, \sim, \bot, \top \right\rangle\] that satisfies the following conditions:
\begin{align*}
\left\langle \underline{B}, \vee,  \wedge, l, u, \bot, \top \right\rangle \text{ is a RCL.}   \tag{rcl}\\
\cdot \text{ satisfies Ccomm, Casso, Cm, and Cb}.   \tag{cdotc}\\
\otimes \text{ satisfies Acomm, wAsso1, wAsso2, Am, and Ab}.   \tag{otimc}\\
\neg \text{ satisfies WN1-N, WN2-N, and WN3-N}.   \tag{negc}\\
\sim \text{ satisfies WN2-S, and WN3-S}. \tag{simc} 
\end{align*}
\end{definition}

\begin{definition}\label{arclaia}
An \emph{Abstract RCL Aggregation Implication Algebra} (CRCLAIA) shall be an algebra of the form ${B} \, =\, \left\langle \underline{B}, \otimes, \cdot, \vee,  \wedge, l, u, \bsi_\neg, \bsi_\sim \bot, \top \right\rangle$ that satisfies:
\begin{align*}
\left\langle \underline{B}, \vee,  \wedge, l, u, \bot, \top \right\rangle \text{ is a RCL.}  \tag{rcl}\\
\cdot \text{ satisfies Ccomm, Casso, Cm, and Cb}.   \tag{cdotc}\\
\otimes \text{ satisfies Acomm, wAsso1, wAsso2, Am, and Ab}.   \tag{otimc}\\
\bsi_\sim \text{ satisfies FPA, SPM, BC3, and IBL}.   \tag{imsc}\\
\bsi_\neg \text{ satisfies FPA, IP, SPM, BC1, BC2, and BC3}.   \tag{inegc}
\end{align*}
\end{definition}

The above allows the following interesting problems. It may be noted that the associated contexts in logic are not known because the defining conditions are not strong enough. 

\begin{problem}
Under what additional conditions are CRCLANA and CRCLAIA representable as concrete RCL aggregation algebras?  
\end{problem}

\section{Illustrative Examples}

In academic learning contexts, all stakeholders approximate concepts within their own frameworks, and perspectives \cite{am2022c}. However, the learning context admits of common languages of discourse -- it is very important that this be \emph{large and expressive} enough. In practical terms, this means that the admitted basic predicates or functions should be many in number, and be endowed with minimalist properties relative to what may be possible in associated contexts. Below an abstract and a concrete example are constructed. 

\subsection{Abstract Example}

Let $B= \{\bot, \top, a, b, c, e, f\}$ be endowed with the lattice order depicted in Figure \ref{rclex}. Suppose the lower and upper approximations are respectively\\ $\{(\bot,\bot), (\top,e), (a,c),(b, b), (c,c),(e,c),(f,\bot)\}$ and\\ $\{(\bot,\bot), (\top,\top), (a,a),(b, \top), (c,e),(e,e),(f,b)\}$ respectively. The operations $\otimes,$ $\cdot,$ $\neg, $ and $\sim$ are then computable as in the three tables, while the implications follow.
  
\begin{center}
 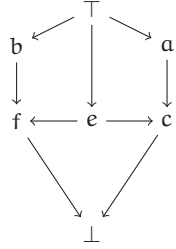
\begin{figure}
 \centering
\begin{tikzpicture}[node distance=0.5cm, auto]
\node (Alp) {$\top$};
\node (A01) [right of=Alp] {};
\node (A0) [right of=A01] {};
\node (A1) [left of=Alp] {};
\node (A) [left of=A1] {};
\node (Al) [below of=A0] {$a$};
\node (Alo) [below of=A] {$b$};
\node (B1) [below of=Al] {};
\node (B2) [below of=B1] {$c$};
\node (B3) [below of=Alo] {};
\node (B4) [below of=B3] {$f$};
\node (B5) [below of=Alp] {};
\node (B6) [below of=B5] {};
\node (B9) [below of=B6] {$e$};
\node (B10) [below of=B9] {};
\node (B11) [below of=B10] {};
\node (B15) [below of=B11] {$\bot$};
\draw[->] (Alp) to node {}(Al);
\draw[->] (Alp) to node {}(Alo);
\draw[->] (Al) to node {}(B2);
\draw[->] (Alo) to node {}(B4);
\draw[->] (B2) to node {}(B15);
\draw[->] (B4) to node {}(B15);
\draw[->] (Alp) to node {}(B9);
\draw[->] (B9) to node {}(B2);
\draw[->] (B9) to node {}(B4);
\end{tikzpicture}
\caption{Bounded Lattice}\label{rclex}
\end{figure}
\end{center}
\begin{table}[hbt]

\begin{minipage}{.33\linewidth}
\centering
\caption{$\otimes$-Table}\label{otimtab} 
\begin{tabular}{l|ccccccc}
\toprule
 $\otimes $ & $\bot $ & $\top $ & $a $ & $b $ & $ c$ & $e $ & $f $   \\
\midrule
$\bot $ & $\bot $ & $\top $ & $a $ & $\top $ & $ e$ & $e $ & $b $ \\
\midrule
$ \top$ & $\top $ & $\top $ & $\top $ & $\top $ & $\top $ & $\top $ & $\top $\\
\midrule
$a $ & $a $ & $ \top$ & $a $ & $\top $ & $\top $ & $\top $ & $\top $\\
\midrule
$ b$ & $\top $ & $\top $ & $\top $ & $\top $ & $\top $ & $\top $ & $\top $\\
\midrule
$ c$ & $e $ & $ \top$ & $\top $ & $\top $ & $e $ & $e $ & $\top $\\
\midrule
$ e$ &  $e $ & $ \top$ & $\top $ & $\top $ & $e $ & $e $ & $\top $\\
\midrule
$ f$ & $b $ & $\top $ & $\top $ & $\top $ & $\top $ & $\top $ & $b $\\
\bottomrule
\end{tabular}

\end{minipage}
\begin{minipage}{.33\linewidth}
\centering
\caption{$\cdot$-Table}\label{cdottab} 
\begin{tabular}{l|ccccccc}
\toprule
$\cdot $ & $\bot $ & $\top $ & $a $ & $b $ & $ c$ & $e $ & $f $   \\
\midrule
$\bot $ & $\bot $ & $\bot $ & $\bot $ & $\bot $ & $ \bot$ & $\bot $ & $\bot $ \\
\midrule
$\top $ & $\bot $ & $e $ & $c $ & $f $ & $c $ & $ c$ & $\bot $\\
\midrule
$a $ & $\bot $ & $ c$ & $ c$ & $\bot $ & $c $ & $c $ & $\bot $\\
\midrule
$ b$ & $\bot $ & $ f$ & $\bot $ & $ b$ & $\bot $ & $\bot $ & $\bot $\\
\midrule
$c $ & $\bot $ & $c $ & $c $ & $\bot $ & $c $ & $ c$ & $\bot $\\
\midrule
$e $ & $\bot $ & $c $ & $c $ & $\bot $ & $c $ & $ c$ & $\bot $\\
\midrule
$f $ & $\bot $ & $\bot $ & $\bot $ & $\bot $ & $\bot $ & $ \bot$ & $\bot $\\
\bottomrule
\end{tabular}
\end{minipage}
\begin{minipage}{.33\linewidth}
\centering
\caption{Negations}\label{negtab}
\begin{tabular}{l|ccccccc}
\toprule
 \textsf{Neg} & $\bot $ & $\top $ & $a $ & $b $ & $ c$ & $e $ & $f $   \\
\midrule
$\neg $ & $b $ &  $\bot $ & $\bot $ & $\bot $ & $ \bot$ & $\bot $ & $\bot $\\
\midrule
$ \sim$ & $\top $ & $f $ & $b $ & $c $ & $f $ & $f $ & $\top $\\
\bottomrule
\end{tabular}
\end{minipage} 
\end{table}

\subsection{Detection of Reasoning and Algorithm Bias}

Skeptical aggregation is often a feature of negative bias in human reasoning. Suppose a toxic person with decision-making powers is constrained by their environment from explicitly discriminating against specific groups of people. Then they are likely to discriminate by adopting additional distracting strategies and empty agendas. The effect of such practices can be analyzed through the aggregation strategies adopted. In fact, serious political analysts frequently try to do precisely that. 

The behavior of biased or defective algorithms is typically reflected in the data used, and produced by it (because the results produced at different stages are again a form of data). Analysis of empirical bias can possibly be deduced from the associated data sets. If it can be shown that bias is due to the algorithm learning from biased data, then it means that algorithm is not safe for the purpose. Such generalities can be analyzed with the proposed methodology.  

Many types of models are possible for information tables that are the result of systemic bias in the data collection process or due to external factors. These may be partly reflected in the data, and in such circumstances the methods invented in this research may be applicable. External factors may be taken into account through the approximation operators (about which the assumptions are left to the practitioner). The essence of the procedure is outlined below:

\begin{itemize}
 \item {Let $C_1, \ldots C_k$ be subsets of $B$ that potentially correspond to specific objects that are discriminated against.}
 \item {Let $E_1, \ldots E_k$ be subsets of $B$ that potentially correspond to specific objects that are unduly favored.}
 \item {Let $F_1, \ldots F_k$ be subsets of $B$ that potentially correspond to specific objects that help in bias determination}
 \item {It is assumed that a similar pattern of bias is maintained by the process.}
 \item {Compute $C_i \cdot F_i$, $C_i \otimes F_i$, $E_i \cdot F_i$, and $E_i \otimes F_i$}
 \item {Let the principal lattice filters generated by these be respectively $\Finv(C_i \cdot F_i)$, $\Finv(C_i \otimes F_i)$, $\Finv(E_i \cdot F_i)$, and $\Finv(E_i \otimes F_i)$.}
 \item {If $B$ is finite, compute the cardinalities of the principal lattice filters generated by the four.}
 \item {A simple measure of bias $\flat(C, E)$ is defined in Equation \ref{biase}. It will be referred to as the \emph{flat bias measure}.}
\end{itemize}

\begin{equation}
 \flat(C, E) = 1 -\dfrac{1}{k}\sum \dfrac{\text{Card}(\Finv(C_i \cdot F_i))}{\text{Card}(\Finv(E_i \cdot F_i))}
 \tag{biase}\label{biase}
\end{equation}

If it is certain that the co-aggregations are justified, then they can be used for the \emph{sharp bias measure} defined in Equation \ref{sharpe}

\begin{equation}
 \eth(C, E) = 1 -\dfrac{1}{k}\sum \dfrac{\text{Card}(\Finv(C_i \otimes F_i)) -\text{Card}(\Finv(C_i \cdot F_i))}{\text{Card}(\Finv(E_i \otimes F_i)) - \text{Card}(\Finv(E_i \cdot F_i))}
 \tag{sharpe}\label{sharpe}
\end{equation}

\section{Skeptical Aggregation and Rough Dependence}\label{skpd}

A theory of rough dependence, and associated measures for subclasses of granular rough sets (in the axiomatic sense) is invented by the present author in earlier papers \cite{am9411,am3930,am9501}. It concerns the extent to which an object depends on another expressed in terms of rough objects of different types. The representation is used in the context of contrasting it with that of probabilistic dependence. In fact, it is proved by her \cite{am9411} that the models of dependence based probability and models of rough dependence  do not share too many axioms. Subsequent research led to the invention of a theory of non-stochastic rough randomness and large-minded reasoners \cite{am23f}.

\begin{definition}
Let ${B} \, =\, \left\langle \underline{B}, \mathcal{G}, l, u, \vee,  \wedge, \bot, \top \right\rangle$ be a structure with $\underline{B}$ being a subset of a powerset $\wp(S)$, $\left\langle \underline{B}, l, u, \vee,  \wedge, \bot, \top \right\rangle$ being a \textsf{RCL}, $\wedge = \cap$, $\vee = \cup$, $\top = S$, and $\bot = \emptyset$, and $\mathcal{G} \subseteq B$ is a granulation on $B$ in the axiomatic sense \cite{am5586,am501,am240} ($t$ being a term function in the algebraic language of the RCL):
\begin{align*}
(\forall x \exists
a_{1},\ldots a_{r}\in \mathcal{G})\, t(a_{1},\,y_{2}, \ldots \,a_{r})=x^{l} \\
\tag{Weak RA, WRA} \mathrm{and}\: (\forall x)\,(\exists
a_{1},\ldots a_{r}\in \mathcal{G})\,t(a_{1},\,a_{2}, \ldots \,a_{r}) =
x^{u},\\
\tag{Lower Stability, LS}{(\forall a \in
\mathcal{G})(\forall {x\in \wp(\underline{S}) })\, ( a\subseteq x\,\longrightarrow\, a \subseteq x^{l}),}\\
\tag{Full Underlap, FU}{(\forall
x,\,a\in\mathcal{G})(\exists
z\in \wp(\underline{S}) )\, x \subset z,\,\&\, a \subset z\,\&\,z^{l}\, =\,z^{u}\, =\,z,}
\end{align*}
$B$ will then be referred to as a \emph{set granular RCL} (sGRCL).
\end{definition}

It is easy to see that all sGRCLs are set HGOS as well.

In a sGRCL $B$, if $\nu(B)$ is the collection of definite objects in some sense, 

\begin{definition}
The $\mathcal{G} \nu$-\emph{infimal degree of dependence} $\beta_{i \tau \nu}$ of $x$ on $z$ is defined by 
\begin{equation}
\beta_{i \mathcal{G} \nu} (x,\, z)\,=\,\inf _{\nu (B) }\,\bigcup \,\{g\,:\,g\in \mathcal{G} \, \&\,g\subseteq x \,\& \, g \subseteq z\}. 
\end{equation}
The infimum is over the $\nu(B)$ elements contained in the union.

The $\mathcal{G} \nu$-\emph{supremal degree of dependence} $\beta_{s \mathcal{G} \nu}$ of $x$ on $z$ is defined by
\begin{equation}
\beta_{s \mathcal{G} \nu} (x,\, z)\,=\,\sup _{\nu (B) }\,\bigcup \,\{g\,:\,g\in \mathcal{G} \,\&\, g\subseteq x \,\& \, g \subseteq z\}.                                                                                                     
\end{equation}
The supremum is over the $\nu(S)$ element containing the union.
\end{definition}

If unions of granules are always definite, then the two concepts coincide.

\begin{theorem}
In classical rough sets with $\mathcal{G}$ being the set of equivalence classes and $\nu(B)\,=\, \delta_{l}(B)$ - the set of lower definite elements, then 
\[\beta_{i}x z\,=\, x^{l}\, \cap\,z^ l \,=\,\beta_{s} x z \]
The converse of $(x\odot y\,=\, 0 \,\longrightarrow \, \beta_{i} x y \,=\, 0)$ is not true in general. 
\end{theorem}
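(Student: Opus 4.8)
The plan is to collapse both $\beta_i$ and $\beta_s$ to the single term $x^l\cap z^l$ by simplifying the union of granules on which they are built, and then to observe that in the classical setting this union already lies in $\nu(B)=\delta_l(B)$, so the infimal and supremal constructions in the two definitions do nothing; the final sentence is then settled by a one-line counterexample. First I would record that for an equivalence class $g\in\mathcal{G}$ one has $g\subseteq x$ and $g\subseteq z$ precisely when $g\subseteq x\cap z$, whence
\[
\bigcup\{g : g\in\mathcal{G},\ g\subseteq x,\ g\subseteq z\}=\bigcup\{g\in\mathcal{G} : g\subseteq x\cap z\}=(x\cap z)^l .
\]
Since $\wedge=\cap$ in an sGRCL, \textsf{lu3} gives $(x\cap z)^l=x^l\cap z^l$, so $\beta_i(x,z)$ and $\beta_s(x,z)$ are the results of applying the $\nu(B)$-infimal, resp.\ $\nu(B)$-supremal, operation of their definitions to the one set $x^l\cap z^l$.

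Next I would check that $x^l\cap z^l\in\nu(B)=\delta_l(B)$, i.e.\ that it is lower definite: using \textsf{lu3} and then \textsf{lu1},
\[
(x^l\cap z^l)^l=x^{ll}\cap z^{ll}=x^l\cap z^l .
\]
Equivalently, in classical rough sets every union of equivalence classes is definite, so unions of granules are always definite here. Hence the only element of $\nu(B)$ that either construction can pin against $x^l\cap z^l$ is $x^l\cap z^l$ itself, giving $\beta_i(x,z)=x^l\cap z^l=\beta_s(x,z)$. This is exactly the promised instance of ``if unions of granules are always definite, then the two concepts coincide''.

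For the remaining claim, recall $x\odot y:=x^l\vee y^l=x^l\cup y^l$ and $0=\bot=\emptyset$. The implication $x\odot y=0\Rightarrow\beta_i(x,y)=0$ is trivial, since $x^l\cup y^l=\emptyset$ forces $x^l=\emptyset$ and hence $\beta_i(x,y)=x^l\cap y^l=\emptyset$. Its converse asserts $x^l\cap y^l=\emptyset\Rightarrow x^l\cup y^l=\emptyset$, which fails as soon as two distinct granules are available: in any classical approximation space with distinct equivalence classes $g_1\neq g_2$ (both members of $\underline B$, e.g.\ when $\underline B=\wp(\underline S)$), set $x=g_1$ and $y=g_2$. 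These are definite and disjoint, so $\beta_i(x,y)=g_1\cap g_2=\emptyset=0$, whereas $x\odot y=g_1\cup g_2\neq\emptyset$, refuting the converse.

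The only genuine obstacle I anticipate is interpretational rather than computational: one must read off from the definitions of $\beta_i$ and $\beta_s$ that, once the relevant union of granules already belongs to $\nu(B)$, both the $\nu(B)$-infimal and the $\nu(B)$-supremal construction return it unchanged. Granting that, the statement is a short manipulation with \textsf{lu1}, \textsf{lu3} and \textsf{topbot}, using nothing about classical rough sets beyond the facts that distinct equivalence classes are disjoint and that equivalence classes are nonempty.
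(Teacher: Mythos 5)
Your argument is correct, and it is essentially the proof the paper intends but does not write out: the theorem is stated without proof, and the sentence immediately preceding it (``if unions of granules are always definite, then the two concepts coincide'') is precisely your key step, which you justify via $\bigcup\{g\in\mathcal{G}: g\subseteq x,\ g\subseteq z\}=(x\cap z)^l=x^l\cap z^l$ together with lower definiteness of $x^l\cap z^l$ from \textsf{lu1} and \textsf{lu3}; your two-class counterexample for the converse of $x\odot y=0\longrightarrow\beta_i xy=0$ is also the obvious intended one. The only caveat you rightly flag yourself is interpretational: the paper's wording of the $\nu(B)$-infimal and $\nu(B)$-supremal constructions is loose, and your reading (both return the union unchanged once it already lies in $\nu(B)=\delta_l(B)$) is the one forced by the paper's own remark, so nothing is missing.
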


The following proposition can be deduced
\begin{proposition}
In the context of classical rough sets, the degrees of rough dependence (relative to $\nu(B)$ being the set of lower definite elements) coincides with the skeptical aggregation operation.  
\end{proposition}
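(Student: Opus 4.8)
The plan is to obtain this proposition as an immediate corollary of the theorem established just above, by unwinding the definition of the skeptical aggregation operation. Recall that in the paper the operation $\cdot$ of Definition \ref{ccaoa} — the \textsf{CCA} — is precisely what is being called the \emph{skeptical} (or pessimistic) aggregation, and that on a set granular RCL one has $\wedge = \cap$, so $x \cdot z = x^l \wedge z^l = x^l \cap z^l$ for all $x, z$. Thus the proposition amounts to showing that the two dependence operations $\beta_i$ and $\beta_s$ coincide, as binary operations on $B$, with $(x,z)\mapsto x^l \cap z^l$.

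First I would note that the classical Pawlak model on a powerset $\wp(S)$, with $\mathcal{G}$ the partition into equivalence classes, is a set granular RCL, so that both the operation $\cdot$ of Definition \ref{ccaoa} and the degrees $\beta_i, \beta_s$ are defined on it. Then I would simply invoke the theorem above, which gives $\beta_i(x,z) = x^l \cap z^l = \beta_s(x,z)$ with $\nu(B) = \delta_l(B)$ the set of lower definite elements; combining this with the identity $x \cdot z = x^l \cap z^l$ yields $\beta_i(x,z) = \beta_s(x,z) = x \cdot z$, which is the claim. For completeness I would recall the engine of that equality: the union $\bigcup\{g \in \mathcal{G} : g \subseteq x,\ g \subseteq z\}$ is the union of the equivalence classes contained in $x \cap z$, which in Pawlak's setting is $(x\cap z)^l = x^l \cap z^l$; since this set is already lower definite it lies in $\nu(B)$, so taking the infimum over $\nu(B)$-elements contained in it (respectively the supremum over those containing it) returns the set itself.

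There is no substantive obstacle here; the work is purely in the bookkeeping. The two small points that need to be checked are (i) that the Pawlak structure really does satisfy the sGRCL axioms, so that Definition \ref{ccaoa} applies and $\cdot$ is a genuine operation on it, and (ii) the Pawlak-specific distributivity $(x\cap z)^l = x^l \cap z^l$, which is what makes the union of granules collapse to a lower definite set and forces the inf/sup appearing in the definitions of $\beta_i$ and $\beta_s$ to degenerate. Everything else is substitution of definitions, and the conclusion is that the rough-dependence operation, in either its infimal or its supremal form, is literally the skeptical aggregation $\cdot$.
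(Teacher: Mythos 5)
Your proposal is correct and follows essentially the same route the paper intends: the proposition is read off from the preceding theorem's identity $\beta_i(x,z)=x^l\cap z^l=\beta_s(x,z)$ together with the observation that in the set-theoretic setting the skeptical aggregation is $x\cdot z=x^l\wedge z^l=x^l\cap z^l$. Your added remarks on why the inf/sup over lower definite elements degenerate (since $(x\cap z)^l=x^l\cap z^l$ is itself lower definite in the Pawlak case) merely spell out the content of that theorem and do not change the argument.
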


However, in slightly more general set-theoretical contexts, rough dependence does not coincide with skeptical aggregation. This follows from the above definition (additionally, readers may refer to Section 7 of the paper \cite{am9411}). 

\section{Directions}

It should be stressed that there is much scope for reducing the axioms assumed in studies on rough sets over residuated lattices or ortho-lattices \cite{cccd2018}. This research contributes to this broad project in the spirit of reverse mathematics that seeks a minimum of axioms for a result.  Dualities, somewhat related to recent results \cite{iewa2021}, for CRCLANA and CRCLAIA are of interest.

The terms \emph{pessimistic}, and \emph{optimistic} are used in different senses in the rough set and AIML literature. In the so-called multi-granulation studies \cite{qly} that concern contexts with multiple rough approximations (or multiple relations or granulations) on the same universe, it is used as an adjective for specific derived approximations. However, these are studied under other names in many older papers \cite{rac9,am909,mak2016}. Algebraic aspects are explored by the present author \cite{am909}, and others \cite{dls2022}. Three-way decision strategies are additionally studied \emph{to seek common ground while eliminating differences} \cite{zllmt2020}. Modal logic of the point-wise approximations in some of these contexts are explored in more recent work \cite{akvp18}. The present study is about models of aggregation and co-aggregation from a rough set view, and therefore it is not directly related to these as the idea necessarily involves systems of approximations from different sources (and is relative to at least two such sources). For example, by regulating the nature of the lower approximation, even extremely biased or bigoted views can be expressed by the aggregation $f$ mentioned earlier. As this is not really part of a multi source scenario, possible connections are research topics. Implication operations from a rough set perspective are studied in many related models such as quasi-boolean algebras \cite{ajmkc2016}. The results proved here show that many of the assumptions are not essential. A detailed study will appear separately.

In forthcoming papers, the semantics is extended to antichains of mutually distinct objects, building on earlier work of the present author \cite{am9114}. Further applications to concept modeling in education research and teaching contexts are areas of her ongoing research \cite{am2022c}.

\bibliographystyle{splncs04.bst}
\bibliography{algrough23+}

\end{document}